\documentclass[11pt]{article}
\usepackage{graphicx}

\newcommand{\mfrac}[2]{%
  \mathord{\scalebox{0.90}{$\tfrac{#1}{#2}$}}%
}
\usepackage{geometry}
\usepackage{amsmath,amssymb,amsthm,mathtools}
\usepackage{microtype}
\usepackage{xcolor}
\usepackage{xparse}
\usepackage{fullpage}
\definecolor{suggestioncolor}{RGB}{0,92,175}
\NewDocumentCommand{\suggestion}{m m O{}}{%
    \begingroup
    \color{suggestioncolor}#2%
    \endgroup
}
\usepackage[
    backend=biber,
    style=numeric,
    sorting=nyt,
    backref=true,
    doi=false,
    url=false,
    eprint=false,
    maxbibnames=99
]{biblatex}
\usepackage{aliascnt}
\usepackage[
    colorlinks=true,
    linkcolor=blue,
    citecolor=blue,
    urlcolor=blue,
    pageanchor
]{hyperref}
\usepackage[nameinlink]{cleveref}

\usepackage{algorithm}
\usepackage{algpseudocode}

\DeclareMathOperator{\Ln}{Ln}
\DeclareMathOperator{\loss}{\cL}
\DeclareMathOperator{\ind}{\mathbf I}
\DeclareMathOperator{\sign}{\mathit{sign}}

\DeclareMathOperator{\p}{\mathbb P}
\DeclareMathOperator{\conv}{\operatorname{conv}}
\newtheorem{theorem}{Theorem}

\newaliascnt{lemma}{theorem}
\newtheorem{lemma}[lemma]{Lemma}
\aliascntresetthe{lemma}

\crefname{lemma}{lemma}{lemmas}
\Crefname{lemma}{Lemma}{Lemmas}

\makeatletter
\newcommand{\alphacmd@factory}[1]{}
\newcounter{alphacmdcounter}
\newcommand{\GenerateAlphabetCmds}[2]{%
    \renewcommand{\alphacmd@factory}[1]{%
        \expandafter\providecommand\csname #1##1\endcsname{{#2{##1}}}%
    }
    \setcounter{alphacmdcounter}{0}
    \loop
        \stepcounter{alphacmdcounter}
        \edef\alphacmd@ID{\@Alph\c@alphacmdcounter}
        \expandafter\alphacmd@factory\alphacmd@ID
    \ifnum\thealphacmdcounter<26
    \repeat
}
\newcommand{\GenerateAlphabetCmdsLower}[2]{%
    \renewcommand{\alphacmd@factory}[1]{%
        \expandafter\providecommand\csname #1##1\endcsname{{#2{##1}}}%
    }
    \setcounter{alphacmdcounter}{0}
    \loop
        \stepcounter{alphacmdcounter}
        \edef\alphacmd@ID{\@alph\c@alphacmdcounter}
        \expandafter\alphacmd@factory\alphacmd@ID
    \ifnum\thealphacmdcounter<26
    \repeat
}
\makeatother

\GenerateAlphabetCmds{c}{\mathcal}
\GenerateAlphabetCmdsLower{c}{\mathcal}

\GenerateAlphabetCmds{r}{\mathbf}
\GenerateAlphabetCmdsLower{r}{\mathbf}

\title{\Large\bfseries Tight Generalization Bound for AdaBoost}
\vspace{-1.5em}
\author{
  \normalsize Mikael Møller Høgsgaard\\[-0.1em]
  \normalsize Department of Statistics, University of Oxford
}
\date{}
\begin{document}

\maketitle

\begin{abstract}
\linespread{1.0}\selectfont
\noindent
In this paper we show that the generalization error of AdaBoost is $\Theta\big(\mfrac{d\ln(n\gamma^{2}/d)}{n\gamma^2}+\mfrac{\ln(1/\delta)}{n}\big)$, where $\gamma$ is the advantage guaranteed by the weak learner, $d$ is the VC-dimension of the class containing the weak hypotheses, $n$ is the sample size, and $\delta$ is the confidence parameter. The contribution of this paper is the upper bound; the matching lower bound follows from prior work. The upper bound proof follows by combining the known fact that AdaBoost outputs a voting classifier whose voting function has zero empirical $\gamma/2$-margin loss with what is, to the best of our knowledge, a new margin-based generalization bound for voting classifiers.
\end{abstract}

\section{Introduction}

 An empirical weak learner is a learning algorithm that can produce a hypothesis with accuracy slightly better than random guessing on any reweighting of a given data set. Formally, an algorithm $ \cW $  is an empirical $\gamma$-weak learner for a labeled sample
$S=((x_1,y_1),\ldots,(x_n,y_n))$  and a hypothesis class $\cH\subseteq\{-1,+1\}^{\cX}$ if, when given a distribution $D$ over the examples in $S$, it outputs a hypothesis $\cW(D)\in\cH$ such that
\[
\loss_{D}(\cW(D)):=\sum_{i=1}^{n}D(i)\ind\{\cW(D)(x_i)\ne y_i\}
\leq \frac12-\gamma.
\]
The notion of weak learning was introduced by Kearns and Valiant~\cite{KearnsV94}, who asked the interresting question whether a weak learner could be turned (boosted) into a strong learner. A strong learner~\cite{VapnikC71,Valiant84,BlumerEHW89} is a learning algorithm that can produce a hypothesis with accuracy arbitrarily close to 1 given enough training data. Formally, a strong learner for a distribution $P$ over the instance space $\cX$ is an algorithm that, given a sample $ S=((x_1,y_{1}),\ldots,(x_n,y_{n})) $ of size $n$, where the $x_i$ are drawn i.i.d. from $P$, outputs a predictor $ \cA(S):\cX\to\{-1,+1\} $ such that, with probability at least $1-\delta$ over $S$,
\[
\loss_{P}(\cA(S)) \le r(n,\delta),
\]
where $r(n,\delta)\to0$ as $n\to\infty$ for every fixed $\delta>0$.
The question of whether a weak learner can be turned into a strong learner was answered affirmatively by Schapire~\cite{Schapire89}, by giving an algorithm for doing so. Later, Freund and Schapire~\cite{FreundS95} introduced the famous AdaBoost, for which they received the Gödel Prize in 2003. AdaBoost
\footnote{Throughout this paper, AdaBoost refers to the version stated in
\Cref{alg:adaboost}, following Freund and Schapire
\cite[5]{SchapireF12}. As discussed in \cite[48]{SchapireF12}, an empirical weak learner suffices for AdaBoost to succeed, so we adopt this setting.} produces a strong learner from a weak learner by cleverly combining the weak hypotheses produced over $T$ rounds into a function $\sum_{t=1}^{T}\alpha_t h_t$, where $\alpha_t$ is the weight of the $t$-th weak hypothesis. Its final prediction is obtained by taking the sign of the function; the resulting predictor is called a \emph{voting classifier}. AdaBoost selects these hypotheses by sequentially calling the weak learner on distributions that place more weight on examples misclassified by the current voting classifier.

The general process of turning a weak learner into a strong learner is called boosting. AdaBoost is one of the best-known boosting algorithms; it pioneered the field and demonstrated the practical effectiveness of boosting. Since the introduction of AdaBoost, boosting methods have continued to evolve. Where modern boosting algorithms such as XGBoost~\cite{ChenG16}, LightGBM~\cite{KeMFYLM23}, and CatBoost~\cite{ProkhorenkovaGVDG18} are used for tabular and structured data, when for instance single-prediction latency is important (for a recent overview, see \cite{ericksonP25}).
Although AdaBoost is one of the most extensively studied boosting algorithms, its generalization error is not yet fully understood. In this paper we establish that it's generalization error is
\begin{align*}
  \Theta\left(
\frac{d\ln(n\gamma^2/d)}{n\gamma^2}
+\frac{\ln(1/\delta)}{n}
\right).
\end{align*}
We establish this result by proving that the generalization error of AdaBoost is at most the above, which combined with the lower bound of \cite{HogsgaardLR23} matching the above yields the conclusion.
More formally, we prove the following upper bound (see \Cref{thm:adaboost-generalization} for a formal statement):
\begin{quote} Let $P$ be a distribution over $\cX\times \{-1,+1\}$, and suppose that there exists an algorithm $\cW$ that is an empirical $\gamma$-weak learner for every labeled sample $S=((x_1,y_1),\ldots,(x_n,y_n))$ of size $n$ whose instances $(x_i,y_i)$ are drawn i.i.d. from $P$. If the hypotheses output by $\cW$ belong to a class $\cH$ of VC-dimension $d$, then, with probability at least $1-\delta$ over $S$, AdaBoost, run for $T\geq \ln(n)/\gamma^2$ rounds, outputs a voting classifier with generalization error at most $$O\left(
\frac{d\ln(n\gamma^2/d)}{n\gamma^2}
+\frac{\ln(1/\delta)}{n}
\right).$$
\end{quote}
The upper-bound proof follows by combining the known fact that AdaBoost outputs a voting classifier whose voting function has zero empirical $\gamma/2$-margin loss with what is, to the best of our knowledge, a new margin-based generalization bound for voting classifiers; we derive this bound using a small modification of standard VC-theoretic techniques.

As mentioned the tightness of this upper bound is witnessed by the work \cite{HogsgaardLR23}. More specifically, \cite{HogsgaardLR23} showed that for every VC-dimension $d$, margin $\gamma$, and sample size $n$ satisfying $ \gamma\leq c_{1} $,  $ d\geq c_{2} \ln{(1/\gamma)} $ and $ c_{3}d\gamma^{-2}\leq n \leq  \exp{( \exp{(c_{4}d )}  )}  $, where $ c_{1},c_{2},c_{3},c_{4} $ are positive universal constants, there exist a distribution $P$ over the labeled example space $\cX\times\{-1,1\}$ and an empirical $\gamma$-weak learner $\cW:\Delta((\cX\times\{-1,1\})^n)\to\cH$, where $\cH$ has VC-dimension $d$, such that the generalization error of AdaBoost is at least $$ \Omega\left(
\frac{d\ln(n\gamma^2/d)}{n\gamma^2}
+\frac{\ln(1/\delta)}{n}
\right) ,$$
which gives us the desired tight result up to constants, and in the specified range of the lower bound.

\begin{algorithm}[H]
    \caption{AdaBoost \cite[6]{SchapireF12}}
    \label{alg:adaboost}
    \begin{algorithmic}[1]
        \Statex Given $ (x_{1},y_{1}),\ldots,(x_{n},y_{n}) $, where $ x_{i}\in \cX $ and $ y_i\in\{-1,1\} $
        \Statex Initialize $ D_{1}(i)=\frac{1}{n} $ for $ i=1,\ldots,n $
        \For{$ t=1,\ldots,T $}
            \State Call the weak learner $ \cW $ with the $ D_{t} $-weighted sample to get hypothesis $ h_{t}:\cX\rightarrow \{-1,1\} $
            \State Compute the error of $ h_{t} $ with respect to $ D_{t} $:
            \[
            \epsilon_{t}=\sum_{i=1}^{n}D_{t}(i)\ind\{h_{t}(x_{i})\ne y_{i}\}
            \]
            \State Compute the weight of $ h_{t} $:
            \[
            \alpha_{t}=\frac{1}{2}\ln{\left(\frac{1-\epsilon_{t}}{\epsilon_{t}}\right)}
            \]
            \State Update the distribution $ D_{t} $:
            \[
            D_{t+1}(i)=\frac{D_{t}(i)\exp(-\alpha_{t}y_{i}h_{t}(x_{i}))}{Z_{t}}
            \]
            where $Z_t$ is chosen so that $\sum_{i=1}^{n}D_{t+1}(i)=1$.
        \EndFor
        \State Output the final voting classifier:
        \[
        H(x)=\sign\left(\sum_{t=1}^{T}\alpha_{t}h_{t}(x)\right)
        \]
    \end{algorithmic}
\end{algorithm}
\vspace{-1em}
\section{Related Work}
The previous best upper bound on the generalization error of AdaBoost is due to \cite{HogsgaardG25} and is suboptimal by up to a factor of $\ln{(\ln{(n\gamma^{2}/d)})}^{2}$. This bound follows from a margin-based generalization bound for voting classifiers given in \cite{HogsgaardLR23}. In general, margin-based generalization bounds for voting classifiers have been used to explain the success and generalization properties of boosting algorithms that produce such classifiers. These bounds have been studied in many works, beginning with \cite{BartlettFLS98} and continuing with \cite{Breiman99,KoltchinskiiP02,SrebroST10,GaoZH13,GronlundKLMN19,GronlundKL20,HogsgaardLR23,LarsenS25}. For the special case in which the voters come from a finite hypothesis set, optimal margin-based generalization bounds were given in~\cite{LarsenS25}. These results do not yield a tight generalization-error bound for AdaBoost in full generality because the weak hypotheses need not come from a finite class. When the empirical $\gamma$-margin loss is zero, our new margin-based bound extends this result to hypothesis classes of finite VC-dimension. A separate line of work studies sample-compression bounds for AdaBoost, but these bounds are not known to be tight~\cite{cunhaLR25}. For general binary-classification boosting, the optimal generalization-error rate is known to be $ \Theta(d/(n\gamma^{2})+\ln(1/\delta)/n) $ \cite{HogsgaardLR23}, the classifier used here is not a voting classifier over $\cH$.
Boosting has also been studied in many other settings, including agnostic binary classification~\cite{Ben-DavidLM01,KalaiK09,Feldman10,BrukhimCHM20,GhaiS24,GhaiS25,DaCunhaHPS25,DaCunhaHP26}; multiclass boosting~\cite{SchapireS98,MukherjeeS13,BrukhimDM23}; regression~\cite{Drucker97,DuffyH02,Friedman01,BühlmannB03}; parallel boosting~\cite{LongS13,karbasiL24,XinHJ24,daCunhaHL24}; smooth boosting~\cite{Servedio01,Gavinsky02,Barak09,BunCS20}; replicable boosting~\cite{ImpagliazzoLPS22,larsenMS26}; and generalized boosting~\cite{bressan25a}, among many others. For an overview of boosting theory, see \cite{SchapireF12}.

\section{Setup and Notation}
In this section we introduce the notation and definitions used throughout the paper. We denote by $ \cX $ the instance space, and by $ \cH\subseteq\{-1,+1\}^{\cX} $ a class of voters.
For a hypothesis class $\cH$, we denote by $ \conv(\cH) $ the convex hull of $ \cH $, i.e.,  $ \conv(\cH)=\{\sum_{h\in\cH}\alpha_h h: \alpha_h\ge 0, \sum_{h\in\cH}\alpha_h=1\} $. We call an element $g\in\conv(\cH)$ a \emph{voting function}, and the induced voting classifier is $ G(x)=\sign(g(x)) $.
Given a distribution $P$ on $\cX\times\{-1,+1\}$ and an i.i.d.
sample $S=((x_i,y_i))_{i=1}^n\sim P^n$, the true risk and empirical risk of a classifier $ G: \cX\rightarrow \{-1,+1\} $  are
\[
\loss_P(G)=\p(G(X)\ne Y),\qquad
\loss_S(G)=\frac1n\sum_{i=1}^n
\ind\{G(x_i)\ne y_i\}.
\]
For a margin
parameter $\gamma\in[0,1]$, define the population and empirical margin losses
of a function $g: \cX\rightarrow \mathbb R$ by
\[
\loss_P^{\gamma} (g)=\p(Yg(X)\le \gamma),\qquad
 \loss_S^\gamma  (g)=\frac1n\sum_{i=1}^n
\ind\{y_i g(x_i)\le \gamma\}.
\]
We say that an algorithm $\cW$ is an empirical $\gamma$-weak learner with respect to a hypothesis class $\cH\subseteq\{-1,+1\}^{\cX}$ and a sample $ S $  if, for every distribution $D$ over the samples examples\footnote{The distribution $D$ assigns a weight to each example; the weak learner receives the weighted sample $((D(1),x_1,y_1),(D(2),x_2,y_2),\ldots,(D(n),x_n,y_n))$.}, it outputs a hypothesis $\cW(D)\in\cH$ such that
\[
\loss_{D}(\cW(D)):=\sum_{i=1}^{n}D(i)\ind\{\cW(D)(x_i)\ne y_i\}
\le \frac12-\gamma.
\]
We will use $  \Delta((\cX\times\{-1,1\})^{*}) $ to denote the set of all distributions over finite sequences of labeled examples from $ \cX\times\{-1,1\} $. Accordingly, $\cW$ is a mapping $ \cW: \Delta((\cX\times\{-1,1\})^{*})\rightarrow \cH $.
Throughout, $\Ln(t)=\max\{1,\ln t\}$ denotes the truncated logarithm. Furthermore, we let $ \sign(x)=-\ind\{x<0\}+ \ind\{x\ge 0\} $; that is, the sign of $0$ is set to $1$.

\section{Upper Bound on the Generalization Error of AdaBoost}
In this section, we prove the new generalization bound for AdaBoost. As mentioned in the introduction, the upper bound on the generalization error of AdaBoost follows from a new margin-based generalization bound for voting classifiers, which we state here and prove in the following section.

\begin{theorem}[Voting function margin bound]\label{thm:voting-margin-bound}
Let $\cH\subseteq\{-1,+1\}^{\cX}$ be a class of voters with VC-dimension $d$. There are
universal constants $c,C>0$ such that, for every $\gamma\in(0,1]$,
$\delta\in(0,1)$, distribution $P$ on $\cX\times\{-1,+1\}$, and
$n\ge 1$, the following holds with probability at least $1-\delta$ over
$S\sim P^n$: every voting function
$g\in\conv (\cH)$ satisfying
$ \loss_S^\gamma  (g)=0$ satisfies
\[
\loss_{P}^{0}(g)
\le
C\left(
\frac{d\Ln(c\gamma^{2}n/d)}{\gamma^{2}n}
+\frac{\ln(e/\delta)}{n}
\right),
\qquad
d_\gamma=c\frac{d}{\gamma^2}.
\]
\end{theorem}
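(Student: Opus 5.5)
We prove \Cref{thm:voting-margin-bound} by reducing to the realizable case of a finite-complexity class through random sparsification, combined with a symmetrization / ghost-sample argument. For this class we want the Haussler-type optimal realizable bound, with no $\ln n$ factor.

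\textbf{Step 1: sparsification.} Fix $g=\sum_h\alpha_h h\in\conv(\cH)$ and set $k=\lceil c' \ln(\cdot)/\gamma^2\rceil$. Draw $k$ voters i.i.d.\ according to the weights $\alpha$ and let $f$ be their average. By Hoeffding, for any fixed point $x$ with $yg(x)>\gamma$, we have $\p_f(yf(x)\le \gamma/2)\le e^{-k\gamma^2/8}$. Conversely, for any $x$ with $yg(x)\le 0$, we have $\p_f(yf(x)> \gamma/2)\le e^{-k\gamma^2/8}$.

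The sparsified functions $f$ range over averages of $k$ elements of $\cH$. Thresholded at $\gamma/2$, these form a class of VC-dimension $O(dk\ln k)$. A naive VC bound would lose a $\ln(1/\gamma)$ factor and a $\ln n$ factor. The paper's ``small modification'' is presumably to avoid a union bound over the growth function of the whole $k$-sparse class.

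\textbf{Step 2: ghost sample.} Take a ghost sample $S'$ of size $n$. The standard realizable double-sample argument then reduces the claim to the following statement. Conditional on the $2n$ points, under a random swap of $S$ and $S'$, the probability that some $g$ has zero $\gamma$-margin loss on $S$ but at least $\varepsilon n$ errors on $S'$ is at most $\delta$.

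For a fixed $g$ we pass to $f$. With positive probability over the draw of $f$, $f$ has $\gamma/2$-margin errors on at most a small fraction of $S$ and has at least $\varepsilon n/2$ errors of the form $yf\le\gamma/2$ on $S'$. It would be cleaner to have zero margin errors on $S$. To get this, we choose the sparsity $k\approx \ln(n\gamma^2/d)/\gamma^2$, so that the expected number of margin failures on $S$ is at most $n e^{-k\gamma^2/8}\lesssim d/\gamma^2$. This is $\ll \varepsilon n$ when $\varepsilon\gtrsim d\ln(n\gamma^2/d)/(n\gamma^2)$, which is exactly the target rate.

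\textbf{Step 3: counting.} We count the restrictions of sparse $f$ to the $2n$ points. By Sauer--Shelah applied coordinatewise, there are at most $(2en/d)^{dk}$ such restrictions. Each pattern that makes $o(\varepsilon n)$ margin errors on $S$ and at least $\varepsilon n/2$ on $S\cup S'$ survives a random swap with probability roughly $2^{-\Omega(\varepsilon n)}$; the tolerance for a few errors on $S$ costs only a binomial-tail factor. A union bound therefore requires
\[
dk\ln(2en/d)\lesssim \varepsilon n .
\]
With $k\approx\ln(n\gamma^2/d)/\gamma^2$, this gives an extra factor of $\ln(n/d)$. That factor is the main obstacle: the naive count yields $d\ln(n\gamma^2/d)\ln(n/d)/(n\gamma^2)$.

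\textbf{Removing the extra logarithm.} We would first try to count restrictions to a cleverer set of points. A sparse $f$ restricted to $2n$ points is determined by the $k$ voters' patterns, but only up to how they behave on points of interest. Alternatively, we would apply the compression trick of \cite{LarsenS25} in the finite-class case: take $\cH'$, an $\varepsilon'$-net (in the empirical $L_1$ metric on the $2n$ points) of $\cH$ with $\varepsilon'\approx\gamma$. Its size is $(1/\gamma)^{O(d)}$ by Haussler's packing lemma, independent of $n$. Replacing each voter by its net element perturbs $yf$ by at most $\gamma/4$ except on a small fraction of points. The count then becomes $(C/\gamma)^{O(dk)}$, so the requirement reads $dk\ln(1/\gamma)\lesssim\varepsilon n$.

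This still costs a $\ln(1/\gamma)$ factor. To absorb it we would use a chaining (multiscale) version: sparsify at dyadic scales $\gamma_j=2^{-j}$, with voters at scale $j$ approximated by nets at precision $2^{-j}$ and $k_j\propto 2^{2j}$ fewer terms. The resulting geometric series should sum to $O(dk)$, provided the level-$j$ failure probabilities are controlled as in Step 2.

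Finally, the $\ln(e/\delta)/n$ term comes from the swap probability $2^{-\Omega(\varepsilon n)}\le\delta$. We would also check the definition of $d_\gamma$: with $d_\gamma=cd/\gamma^2$, the bound reads $C\bigl(d_\gamma\Ln(n/d_\gamma)+\ln(e/\delta)\bigr)/n$ up to constant rescaling. This is exactly the optimal realizable rate for a ``class of dimension $d_\gamma$'', suggesting that the chaining/net approach, applied via the one-inclusion or Haussler-type realizable bound with effective dimension $d_\gamma$, is the intended route.
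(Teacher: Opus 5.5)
Your proposal does not reach the stated rate; the step that would remove the extra logarithm is missing. By your own accounting in Step 3, sparsifying to $k\approx\ln(n\gamma^2/d)/\gamma^2$ voters and counting their restrictions forces $dk\ln(n/d)\lesssim\varepsilon n$. That gives a bound of order $d\ln(n\gamma^2/d)\ln(n/d)/(n\gamma^2)$, which is of the same type as the classical suboptimal margin bounds.

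The two proposed repairs are not arguments.
\begin{itemize}
\item The net step fails as stated. If each voter is replaced by an element of an empirical $L_1$-net at scale $\varepsilon'\approx\gamma$, you only get $\|f-f'\|_{L_1}\le\varepsilon'$. Markov then bounds the fraction of the $2n$ points with $|f-f'|>\gamma/4$ by $4\varepsilon'/\gamma$, a constant independent of $\varepsilon$. You need this fraction to be $o(\varepsilon)$ to preserve ``few margin failures on $S$, at least $\varepsilon n/2$ errors on $S'$''. Taking $\varepsilon'\approx\gamma\varepsilon$ instead gives a net of size $(1/(\gamma\varepsilon))^{O(d)}$, which brings back a $\ln n$-type factor.
\item Even granting the net, you concede a leftover $\ln(1/\gamma)$. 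The dyadic chaining meant to remove it is only asserted (``should sum to $O(dk)$''). You give no account of how the per-level approximation failures, each corrupting some fraction of points, combine with the requirement of $o(\varepsilon n)$ failures on $S$.
\end{itemize}
Finally, appealing to a Haussler or one-inclusion bound ``with effective dimension $d_\gamma$'' names the target without a mechanism. The sign class of $\conv(\cH)$ has no finite VC dimension, so some set system of VC dimension $O(d/\gamma^2)$ must actually be exhibited.

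The paper's proof supplies exactly such an object and needs no sparsification.
\begin{itemize}
\item It draws a ghost sample of only $m=\lfloor\varepsilon n/2\rfloor$ points and requires \emph{all} of them to have non-positive margin, which gives $\p(E_1)\le\varepsilon^{-m}\p(E_2)$.
\item Conditioned on the pooled sample of size $N=n+m$, every point then has margin either $>\gamma$ or $\le 0$. This lets one bound the VC dimension of the family $\mathcal A_S$ of admissible ghost index sets directly. If $J$ with $|J|=k$ is shattered, then for every sign vector $\sigma$ some $g\in\conv(\cH)$ has $\sigma_i(y_ig(x_i)-\gamma/2)\ge\gamma/2$ for all $i\in J$.
\item The supremum of a linear functional over $\conv(\cH)$ equals its supremum over $\cH$. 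Averaging over Rademacher $\sigma$ therefore yields $\gamma/2\le\mathbb E_\sigma\sup_{h\in\cH}\frac1k\sum_{i\in J}\sigma_iy_ih(x_i)\le\sqrt{c'd/k}$, so $k\le 4c'd/\gamma^2$.
\item Sauer's lemma gives $|\mathcal A_S|\le(eN/d_\gamma)^{d_\gamma}$. The factor $\binom{N}{m}^{-1}\le(m/N)^m$, with $m\le\varepsilon N/2$, cancels $\varepsilon^{-m}$ down to $2^{-m}$.
\end{itemize}
This produces $d_\gamma\Ln(n/d_\gamma)+\ln(e/\delta)$ with no extra logarithm. The idea your proposal lacks is this gap-based (fat-shattering/Rademacher) VC bound on the family of possible error sets, which the all-errors ghost sample makes available.
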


Furthermore, we need the following known result about AdaBoost, which is a direct consequence of the analysis of AdaBoost given in \cite[111--113]{SchapireF12}.
\begin{lemma}[AdaBoost margin bound]\label{lem:adaboost-margin}
Let $n\geq 2$ and $0<\gamma<1/2$. Suppose that the errors in \Cref{alg:adaboost} satisfy $\epsilon_t\leq 1/2-\gamma$ for every $t=1,\ldots,T$. Define $A=\sum_{t=1}^{T}\alpha_t$ and the voting function $g(x)=\sum_{t=1}^{T}\frac{\alpha_t}{A}h_t(x)$. Then
\[
\loss_{S}^{\gamma/2}(g)\leq \left( \sqrt{(1-2\gamma)^{1-\gamma/2}(1+2\gamma)^{1+\gamma/2}}\right)^{T}.
\]
Moreover, if $ T\geq \frac{\ln{n}}{\gamma^{2}} $, then $ \loss_{S}^{\gamma/2}(g)=0 $.
\end{lemma}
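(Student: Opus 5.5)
This is the standard AdaBoost training-margin analysis from \cite[111--113]{SchapireF12}, and I would reproduce it in three steps.

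\emph{Step 1: normalization.} I would first write $f(x)=\sum_{t=1}^{T}\alpha_t h_t(x)$. Since $\epsilon_t\le 1/2-\gamma<1/2$, every $\alpha_t>0$, so $A>0$ and $g=f/A\in\conv(\cH)$. For each $i$,
\[
y_i g(x_i)\le \gamma/2 \iff y_i f(x_i)\le (\gamma/2) A .
\]
This holds if and only if $\exp\bigl(-y_i f(x_i)+(\gamma/2)A\bigr)\ge 1$. Bounding the indicator by this exponential gives
\[
\loss_S^{\gamma/2}(g)\le \frac1n\sum_{i=1}^n \exp\bigl(-y_if(x_i)\bigr)\exp\bigl((\gamma/2)A\bigr).
\]

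\emph{Step 2: unroll the weight updates.} Unrolling the update rule gives $D_{T+1}(i)=\exp(-y_if(x_i))/(n\prod_t Z_t)$. Summing over $i$ yields $\frac1n\sum_i \exp(-y_if(x_i))=\prod_t Z_t$. With the chosen $\alpha_t$ we have $Z_t=2\sqrt{\epsilon_t(1-\epsilon_t)}$ and $e^{\alpha_t}=\sqrt{(1-\epsilon_t)/\epsilon_t}$. Hence
\[
\loss_S^{\gamma/2}(g)\le\prod_{t=1}^T 2\sqrt{\epsilon_t^{1-\gamma/2}(1-\epsilon_t)^{1+\gamma/2}} .
\]

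\emph{Step 3: monotonicity in $\epsilon_t$.} Consider $\phi(\epsilon)=(1-\gamma/2)\ln\epsilon+(1+\gamma/2)\ln(1-\epsilon)$. Its derivative $\frac{1-\gamma/2}{\epsilon}-\frac{1+\gamma/2}{1-\epsilon}$ is positive exactly when $\epsilon<(1-\gamma/2)/2=1/2-\gamma/4$. Since $\epsilon_t\le 1/2-\gamma<1/2-\gamma/4$, each factor is at most its value at $\epsilon=1/2-\gamma$. That value is $2\sqrt{(\tfrac{1-2\gamma}{2})^{1-\gamma/2}(\tfrac{1+2\gamma}{2})^{1+\gamma/2}}=\sqrt{(1-2\gamma)^{1-\gamma/2}(1+2\gamma)^{1+\gamma/2}}$, because the powers of $1/2$ sum to $2$ under the square root. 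This gives the first claim.

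\emph{Moreover part.} The margin loss takes values in $\{0,1/n,2/n,\dots\}$, so it suffices to show that the bound is $<1/n$. I need the logarithm of the base,
\[
\tfrac12\bigl[(1-\gamma/2)\ln(1-2\gamma)+(1+\gamma/2)\ln(1+2\gamma)\bigr],
\]
to be at most $-c\gamma^2$ with $c$ large enough that $T\ge\ln n/\gamma^2$ forces the bound below $1/n$. Expanding in $\gamma$ gives $\tfrac12[-4\gamma^2+2\gamma^2+O(\gamma^3)]=-\gamma^2+O(\gamma^3)$. More precisely, set $\psi(\gamma)$ equal to the bracket. Then $\psi(0)=0$ and $\psi'(\gamma)=-\tfrac12\ln\frac{1+2\gamma}{1-2\gamma}+\gamma\bigl(\frac{2}{1-4\gamma^2}-\ldots\bigr)$, which I would compute exactly and bound.

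\textbf{Main obstacle.} The hard step is the constant. The naive rate is only $-\gamma^2(1+o(1))$, and $T=\ln n/\gamma^2$ exactly gives a bound of about $n^{-1}$, not strictly less than $1/n$. I expect the higher-order terms in $\gamma$ to help, but a clean inequality valid for all $\gamma<1/2$ still needs care. If the exact constant is tight, I would use strict inequality from the cubic terms, whose sign I would check (for instance the $\gamma^3$ coefficient), together with integrality of the loss. Otherwise I would accept a harmless constant adjustment in the number of rounds.
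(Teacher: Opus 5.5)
Your Steps 1--3 are correct. They reproduce the Schapire--Freund argument that the paper only cites for the first inequality.

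The gap is in the \emph{moreover} part, which is the one piece the paper actually proves (in its appendix). You never establish $\tfrac12\psi(\gamma)\le-\gamma^2$ on all of $(0,1/2)$. A truncated expansion $-\gamma^2+O(\gamma^3)$ only controls small $\gamma$, while $\gamma$ may approach $1/2$. Your fallback of increasing $T$ by a constant factor would prove a weaker statement than the one claimed. The strictness mechanism you suggest also cannot work: replacing $\gamma$ by $-\gamma$ swaps the two terms of $\psi$, so $\psi$ is even and its $\gamma^3$ coefficient is zero. In addition, the logarithmic term of your $\psi'$ has the wrong sign; in fact $\psi'(\gamma)=\tfrac12\ln\frac{1+2\gamma}{1-2\gamma}-\frac{6\gamma}{1-4\gamma^2}$.

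The missing step is to regroup $\psi(\gamma)=\ln(1-4\gamma^2)+\frac{\gamma}{2}\ln\frac{1+2\gamma}{1-2\gamma}$ and expand both logarithms in powers of $(2\gamma)^2$. This is legitimate since $2\gamma<1$, and it gives
\[
\tfrac12\psi(\gamma)=\sum_{k=1}^{\infty}\Bigl(\frac{1}{4(2k-1)}-\frac{1}{2k}\Bigr)(2\gamma)^{2k}.
\]
The $k=1$ coefficient is $-\tfrac14$, contributing exactly $-\gamma^2$. Every coefficient with $k\ge2$ is strictly negative, so $\tfrac12\psi(\gamma)<-\gamma^2$ for all $\gamma\in(0,1/2)$.

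This strict inequality resolves your worry that the bound is only ``about $n^{-1}$''. For $T\ge\ln(n)/\gamma^2$ it gives $\loss_S^{\gamma/2}(g)<e^{-T\gamma^2}\le 1/n$. Since the empirical margin loss is a multiple of $1/n$, it equals $0$.

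If you prefer your derivative route, set $\chi(\gamma)=\psi(\gamma)+2\gamma^2$. Then $\chi(0)=0$ and $\chi'(\gamma)=\sum_{k\ge2}\bigl(\frac{1}{2k-1}-3\bigr)(2\gamma)^{2k-1}<0$, which gives the same conclusion.
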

The final conclusion $\loss_S^{\gamma/2}(g)=0$ for $T\geq\ln(n)/\gamma^2$ is not stated explicitly in \cite[111--113]{SchapireF12}; we verify it in \Cref{sec:appendix}. Combining \Cref{lem:adaboost-margin} with \Cref{thm:voting-margin-bound} now yields the paper's main result: an upper bound on AdaBoost's generalization error that matches the lower bound of \cite{HogsgaardLR23} up to universal constants.

\begin{theorem}[Generalization error of AdaBoost]\label{thm:adaboost-generalization}
Let $\cH$ be a hypothesis class with VC-dimension $d\geq 1$, and let
$\gamma\in(0,1/2)$, $\delta\in(0,1)$, $n\geq 2$, and $P$ be a distribution
on $\cX\times\{-1,+1\}$. Suppose that
$\cW\colon\Delta((\cX\times\{-1,1\})^{*})\to\cH$ is an empirical
$\gamma$-weak learner for every sample $S\sim P^n$. Then, with probability
at least $1-\delta$ over $S\sim P^n$, the voting classifier $H$ returned by
AdaBoost after $T\geq \ln(n)/\gamma^2$ rounds satisfies
\[
\loss_P(H)
=O\left(
\frac{d\Ln(n\gamma^2/d)}{n\gamma^2}
+\frac{\ln(1/\delta)}{n}
\right).
\]
\end{theorem}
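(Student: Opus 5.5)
The plan is to combine \Cref{lem:adaboost-margin} with \Cref{thm:voting-margin-bound}, applied with margin parameter $\gamma/2$ instead of $\gamma$. First, since $\cW$ is an empirical $\gamma$-weak learner for every sample $S\sim P^n$, every hypothesis $h_t$ returned in \Cref{alg:adaboost} satisfies $\epsilon_t\le 1/2-\gamma$. This holds for each $t$, because $D_t$ is a distribution over the examples of $S$.

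Two degenerate cases need care. If $\epsilon_t=0$, then $\alpha_t=\infty$; I would treat this by the usual convention that AdaBoost stops and outputs $h_t$. Such an $h_t$ is a single voter, i.e.\ a vertex of $\conv(\cH)$, with $y_ih_t(x_i)=1>\gamma/2$ for every $i$, so the argument below still applies. Otherwise $\alpha_t>0$ because $\epsilon_t<1/2$, so $A>0$ and the normalized voting function $g=\sum_t(\alpha_t/A)h_t$ lies in $\conv(\cH)$.

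By \Cref{lem:adaboost-margin}, since $T\ge\ln(n)/\gamma^2$ and $n\ge2$, we get $\loss_S^{\gamma/2}(g)=0$. The returned classifier is $H(x)=\sign(\sum_t\alpha_th_t(x))=\sign(g(x))$, since dividing by $A>0$ does not change the sign. Hence
\[
\loss_P(H)=\p(\sign(g(X))\ne Y)\le \p(Yg(X)\le 0)=\loss_P^0(g).
\]
If $g(X)=0$ the sign is $+1$, and a misclassification then requires $Y=-1$, so $Yg(X)=0\le0$; the inequality therefore holds in all cases.

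Next, invoke \Cref{thm:voting-margin-bound} with margin $\gamma/2\in(0,1/4)$. With probability at least $1-\delta$, every $g'\in\conv(\cH)$ with $\loss_S^{\gamma/2}(g')=0$ satisfies
\[
\loss_P^0(g')\le C\left(\frac{4d\Ln(c\gamma^2n/(4d))}{\gamma^2 n}+\frac{\ln(e/\delta)}{n}\right).
\]
Because this is uniform over all such $g'$, it applies to the data-dependent $g$ produced by AdaBoost. No measurability or dependence issue arises beyond what the theorem already handles, and this is precisely why the uniform statement is needed.

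The remaining step is cleanup of constants. For $\Ln(c\gamma^2n/(4d))\le \Ln(n\gamma^2/d)\cdot O(1)$, note that $\Ln(ax)\le \Ln(x)+\ln\max(a,1)\le (1+\ln\max(a,1))\Ln(x)$ because $\Ln\ge1$; this absorbs the constant $c/4$. For the confidence term, $\ln(e/\delta)=1+\ln(1/\delta)$. The extra $1/n$ is dominated by $d\Ln(n\gamma^2/d)/(n\gamma^2)$, since $d\ge1$, $\gamma<1$ and $\Ln\ge1$.

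I expect no real obstacle, since all the work lives in \Cref{thm:voting-margin-bound}. The only points requiring attention are the rescaling $\gamma\to\gamma/2$, the sign-at-zero convention, and the degenerate case $\epsilon_t=0$.
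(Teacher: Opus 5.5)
Your proposal is correct and follows the paper's proof: the weak-learner guarantee gives $\epsilon_t\le 1/2-\gamma$; \Cref{lem:adaboost-margin} gives $\loss_S^{\gamma/2}(g)=0$ for the normalized $g\in\conv(\cH)$; \Cref{thm:voting-margin-bound} with margin $\gamma/2$ bounds $\loss_P^0(g)$ uniformly; and $\loss_P(H)\le\loss_P^0(g)$ because $H=\sign(g)$. Your handling of the degenerate case $\epsilon_t=0$ and your explicit absorption of the constant $c/4$ and of $\ln(e/\delta)=1+\ln(1/\delta)$ into the $O(\cdot)$ are details the paper leaves implicit, and they are handled correctly.
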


\begin{proof}
Because $\cW$ is an empirical $\gamma$-weak learner, AdaBoost has $\epsilon_t\leq 1/2-\gamma$ in every round. Hence, by \Cref{lem:adaboost-margin}, after $T\geq\ln(n)/\gamma^2$ rounds, the normalized voting function $g(x)=\sum_{t=1}^{T}\frac{\alpha_t}{A}h_t(x)$, where $A=\sum_{t=1}^{T}\alpha_t$, belongs to $\conv(\cH)$ and satisfies $\loss_S^{\gamma/2}(g)=0$. Applying \Cref{thm:voting-margin-bound} with margin $\gamma/2$ then gives, with probability at least $1-\delta$,
\[
\loss_{P}^{0}(g)\leq O\left(\frac{d\Ln{(n\gamma^{2}/d)}}{n\gamma^{2}}+\frac{\ln{(1/\delta)}}{n}\right).
\]
Because $A>0$, normalization does not change the sign of the vote, so $H=\sign(g)$. Therefore
\begin{align*}
\loss_{P}(H)=\p(\sign(g(X))\ne Y)\leq \p(Yg(X) \leq 0)= \loss_{P}^{0}(g),
\end{align*}
where the inequality follows from the fact that $\sign(g(X))\ne Y$ implies $Yg(X)\leq 0$, while the reverse might not be true when $ g(X)=0 $.
This completes the proof.

\end{proof}

\section{Margin-Based Generalization Bound for Voting Classifiers}
In this section we prove the margin-based generalization bound for voting classifiers stated in \Cref{thm:voting-margin-bound}.
The new margin-based bound follows the classical VC-dimension generalization-bound proof technique of \cite{VapnikC71,Valiant84,BlumerEHW89} by introducing a ghost sample. The main difference is that we must bound the VC-dimension of a class of sets defined by the margin losses of voting functions. To do so, we use a slight variation of the standard ghost-sample approach, in which the ghost sample has the same size as the training sample. More specifically, our ghost sample is smaller than in the standard approach but \emph{importantly} has the property that every point in it has nonpositive margin. The nonpositive margin on the ghost sample allows us to apply a Rademacher-complexity argument, to obtain a tighter complexity bound than in previous approaches, and ultimately derive the desired bound. The smaller ghost sample is the main difference from previous methods and the main source of the improved bound.
\begin{proof}
Let $c>0$ be a universal constant, to be fixed by the VC-dimension argument below, and set
\[
d_\gamma=c\frac{d}{\gamma^2},
\qquad
\varepsilon
=10\left(
\frac{d_\gamma\Ln(6n/d_\gamma)}{n}
+\frac{\ln(e/\delta)}{n}
\right).
\]
We will show the claim in \Cref{thm:voting-margin-bound} with $\loss_{P}^{0}(g)\le C\varepsilon$.
If $\varepsilon\ge 1$, the theorem follows by taking $C\geq 10$, since
$\loss_P^0(g)\le 1$. Hence, assume $\varepsilon<1$. It suffices to show that
\[
\mathbb P_{S\sim P^n}\left(
\exists g\in\conv (\cH):
\loss_S^\gamma  (g)=0
\text{ and }
\loss_P^0(g)\ge \varepsilon
\right)\le \delta .
\]
We first relate the target bad event to an event witnessed on a smaller ghost sample. Let $S_1\sim P^n$ and $S_2\sim P^m$ be independent samples, where
$m=\lfloor \varepsilon n/2\rfloor$, and set $N=n+m$. Define the events
\[
\begin{aligned}
E_1
&=
\left\{
\exists g\in\conv (\cH):
\loss_{S_1}^\gamma (g)=0,\,
\loss_P^0(g)\ge \varepsilon
\right\},\\
E_2
&=
\left\{
\exists g\in\conv (\cH):
\loss_{S_1}^\gamma (g)=0,\,
\loss_{S_2}^0 (g)=1
\right\}.
\end{aligned}
\]
For every realization of $S_1$ for which $E_1$ occurs, fix a witnessing function $g$. Since
$\loss_P^0(g)\ge\varepsilon$, the probability that all $m$ points of $S_2$
have nonpositive margin is at least $\varepsilon^m$. Therefore
$
\mathbb P(E_2)
\ge \mathbb P(E_2\cap E_1)
\ge \varepsilon^m\mathbb P(E_1),
$
and consequently
$
\mathbb P(E_1)\le \varepsilon^{-m}\mathbb P(E_2).
$

We next bound $\mathbb P(E_2)$ by conditioning on the pooled sample and counting which $m$-point subsets can play the role of $S_2$. Write $[N]=\{1,\ldots,N\}$, and for a
sample $S=((x_i,y_i))_{i=1}^N$ and $I\subseteq[N]$, write $S|I$ for the
subsample indexed by $I$. By exchangeability of the pooled sample
$S\sim P^N$,
\[
\mathbb P(E_2)
=
\mathbb E_{S\sim P^N}\left[
\frac{1}{\binom{N}{m}}
\sum_{\substack{I\subseteq [N]\\ |I|=m}}
\ind\{I\in\mathcal A_S\}
\right],
\]
where
$
\mathcal A_S=
\left\{
I\subseteq[N], |I|=m:
\exists g\in\conv (\cH)
\text{ such that }
 \loss_{S|I^c}^{\gamma}(g)=0
\text{ and }
 \loss_{S|I}^{0}(g)=1
\right\}.
$

It remains to bound the number of admissible ghost-index sets. We do so through their VC-dimension: we claim that $\mathcal A_S$ has VC-dimension at most $d_\gamma$. Let
$J\subseteq[N]$ be shattered by $\mathcal A_S$, and write $k=|J|$. For each
sign vector $(\sigma_i)_{i\in J}\in\{-1,+1\}^J$, there is a set
$I_\sigma\in\mathcal A_S$ such that
$
I_\sigma\cap J=\{i\in J:\sigma_i=-1\}.
$
For each $\sigma$, choose a function $g_\sigma\in\conv(\cH)$ witnessing that $I_\sigma\in\mathcal A_S$. Thus
\[
y_i g_\sigma(x_i)>\gamma \quad\text{when }\sigma_i=+1,
\qquad
y_i g_\sigma(x_i)\le 0 \quad\text{when }\sigma_i=-1.
\]
For this witness, every term $\sigma_i(y_i g_\sigma(x_i)-\gamma/2)$ with $i\in J$ is at least $\gamma/2$. Consequently, for every sign vector,
\[
\begin{aligned}
\frac{\gamma}{2}
&\le
\sup_{g\in\conv(\cH)}
\frac{1}{k}\sum_{i\in J}\sigma_i
\left(y_i g(x_i)-\frac{\gamma}{2}\right)\\
&=
\sup_{g\in\conv(\cH)}
\frac{1}{k}\sum_{i\in J}\sigma_i y_i g(x_i)
-\frac{\gamma}{2k}\sum_{i\in J}\sigma_i\\
&=
\sup_{\substack{g\in\conv(\cH)\\
g=\sum_{h\in\cH}\alpha_h h}}
\sum_{h\in\cH}\alpha_h
\frac{1}{k}\sum_{i\in J}\sigma_i y_i h(x_i)
-\frac{\gamma}{2k}\sum_{i\in J}\sigma_i\\
&=
\sup_{h\in\cH}
\frac{1}{k}\sum_{i\in J}\sigma_i y_i h(x_i)
-\frac{\gamma}{2k}\sum_{i\in J}\sigma_i .
\end{aligned}
\]
Taking expectations over independent Rademacher signs and using $\mathbb E_\sigma[\sigma_i]=0$ gives
\[
\frac{\gamma}{2}
\le
\mathbb E_\sigma
\sup_{h\in\cH}
\frac{1}{k}\sum_{i\in J}\sigma_i y_i h(x_i)
\le
\sqrt{\frac{c'd}{k}},
\]
because the labels $y_i$ are fixed, $(\sigma_i y_i)_{i\in J}$ is again a Rademacher sequence; the last inequality is therefore the standard Rademacher bound for VC-classes (see \cite[Theorem 5.6]{AFoL_Lecture05}). Taking $c\ge 4c'$ yields
$k\le c d/\gamma^2=d_\gamma$, proving the claim.

Since $\varepsilon<1$, $\varepsilon\geq 10d_\gamma/n$ and $m=\lfloor \varepsilon n/2\rfloor$, we have $d_\gamma<n/10<n+m=N$. Sauer's lemma therefore gives, for every fixed sample $S$,
$
|\mathcal A_S|
\le
\sum_{j=0}^{\lfloor d_\gamma\rfloor}\binom{N}{j}
\le
\left(\frac{eN}{d_\gamma}\right)^{d_\gamma}.
$ Thus, using $\binom{N}{m}\geq(N/m)^m$ gives
\[
\mathbb P(E_2)
\le
\frac{1}{\binom{N}{m}}
\left(\frac{eN}{d_\gamma}\right)^{d_\gamma}
\le
\left(\frac{m}{N}\right)^m
\left(\frac{eN}{d_\gamma}\right)^{d_\gamma}.
\]
Combining this with the lower bound on $\mathbb P(E_2)$ and using $m=\lfloor\varepsilon n/2\rfloor\leq\varepsilon N/2$ gives
\[
\mathbb P(E_1)
\le
\left(\frac{eN}{d_\gamma}\right)^{d_\gamma}
\left(\frac{m}{\varepsilon N}\right)^m
\le
\left(\frac{eN}{d_\gamma}\right)^{d_\gamma}2^{-m}.
\]
Since $\varepsilon n\ge 10\ln(e/\delta)\ge 10$, we have
$m=\lfloor\varepsilon n/2\rfloor\ge \varepsilon n/4$. Also
$\varepsilon< 1$ implies $N=n+m\le 2n$. Hence
\[
\begin{aligned}
\mathbb P(E_1)
&\le
\exp\left(
d_\gamma\ln\left(\frac{eN}{d_\gamma}\right)
-m\ln 2
\right)\\
&\le
\exp\left(
d_\gamma\ln\left(\frac{6n}{d_\gamma}\right)
-\frac{\varepsilon n}{4}\ln 2
\right)\\
&\le \delta,
\end{aligned}
\]
by the choice of $\varepsilon$ and the definition of $\Ln$. This proves the
theorem, after absorbing the numerical constant into the universal constant
$C$.
\end{proof}

\section*{Acknowledgments}
While doing this work, Mikael Møller Høgsgaard was supported by a Carlsberg Internationalisation Fellowships.
\paragraph{LLM Usage and Acknowledgment.}
The proof idea for \Cref{thm:voting-margin-bound} arose from the authors' interactions with ChatGPT 5.5. Although the authors take full responsibility for the content of this work and the correctness of the proof, they acknowledge the LLM's significant contribution to the work.


\begingroup
\printbibliography

@inproceedings{HogsgaardLR23,
  author       = {Mikael M{\o}ller H{\o}gsgaard and
                  Kasper Green Larsen and
                  Martin Ritzert},
  editor       = {Andreas Krause and
                  Emma Brunskill and
                  Kyunghyun Cho and
                  Barbara Engelhardt and
                  Sivan Sabato and
                  Jonathan Scarlett},
  title        = {AdaBoost is not an Optimal Weak to Strong Learner},
  booktitle    = {International Conference on Machine Learning, {ICML} 2023, 23-29 July
                  2023, Honolulu, Hawaii, {USA}},
  series       = {Proceedings of Machine Learning Research},
  volume       = {202},
  pages        = {13118--13140},
  publisher    = {{PMLR}},
  year         = {2023},
  url          = {https://proceedings.mlr.press/v202/hogsgaard23a.html},
  bibsource    = {dblp computer science bibliography, https://dblp.org}
}

@inproceedings{FreundS95,
  author       = {Yoav Freund and
                  Robert E. Schapire},
  editor       = {Paul M. B. Vit{\'{a}}nyi},
  title        = {A decision-theoretic generalization of on-line learning and an application
                  to boosting},
  booktitle    = {Computational Learning Theory, Second European Conference, EuroCOLT
                  '95, Barcelona, Spain, March 13-15, 1995, Proceedings},
  series       = {Lecture Notes in Computer Science},
  volume       = {904},
  pages        = {23--37},
  publisher    = {Springer},
  year         = {1995},
  url          = {https://doi.org/10.1007/3-540-59119-2\_166},
  doi          = {10.1007/3-540-59119-2\_166},
  bibsource    = {dblp computer science bibliography, https://dblp.org}
}

@article{KearnsV94,
  author       = {Michael J. Kearns and
                  Leslie G. Valiant},
  title        = {Cryptographic Limitations on Learning Boolean Formulae and Finite
                  Automata},
  journal      = {J. {ACM}},
  volume       = {41},
  number       = {1},
  pages        = {67--95},
  year         = {1994},
  url          = {https://doi.org/10.1145/174644.174647},
  doi          = {10.1145/174644.174647},
  bibsource    = {dblp computer science bibliography, https://dblp.org}
}

@inproceedings{Valiant84,
  author       = {Leslie G. Valiant},
  editor       = {Richard A. DeMillo},
  title        = {A Theory of the Learnable},
  booktitle    = {Proceedings of the 16th Annual {ACM} Symposium on Theory of Computing,
                  April 30 - May 2, 1984, Washington, DC, {USA}},
  pages        = {436--445},
  publisher    = {{ACM}},
  year         = {1984},
  url          = {https://doi.org/10.1145/800057.808710},
  doi          = {10.1145/800057.808710},
  bibsource    = {dblp computer science bibliography, https://dblp.org}
}

@article{BlumerEHW89,
  author       = {Anselm Blumer and
                  Andrzej Ehrenfeucht and
                  David Haussler and
                  Manfred K. Warmuth},
  title        = {Learnability and the Vapnik-Chervonenkis dimension},
  journal      = {J. {ACM}},
  volume       = {36},
  number       = {4},
  pages        = {929--965},
  year         = {1989},
  url          = {https://doi.org/10.1145/76359.76371},
  doi          = {10.1145/76359.76371},
  bibsource    = {dblp computer science bibliography, https://dblp.org}
}

@article{VapnikC71,
 Author = {Vapnik, Vladimir and Chervonenkis, Alexey},
 Title = {On the uniform convergence of relative frequencies of events to their probabilities},
 journal = {Theory of Probability and its Applications},
 Volume = {16},
 Pages = {264--280},
 Year = {1971},
 Language = {English},
 DOI = {10.1137/1116025},
 zbMATH = {3391742},
 Zbl = {0247.60005}
}

@inproceedings{Schapire89,
  author       = {Robert E. Schapire},
  title        = {The Strength of Weak Learnability (Extended Abstract)},
  booktitle    = {30th Annual Symposium on Foundations of Computer Science, Research
                  Triangle Park, North Carolina, USA, 30 October - 1 November 1989},
  pages        = {28--33},
  publisher    = {{IEEE} Computer Society},
  year         = {1989},
  url          = {https://doi.org/10.1109/SFCS.1989.63451},
  doi          = {10.1109/SFCS.1989.63451},
  bibsource    = {dblp computer science bibliography, https://dblp.org}
}

@article{ericksonP25,
  title={TabArena: A Living Benchmark for Machine Learning on Tabular Data},
  author={Erickson, Nick and Purucker, Lennart and others},
  journal={arXiv preprint arXiv:2506.16791},
  year={2025},
  url={https://huggingface.co/spaces/TabArena/leaderboard}
}

@inproceedings{KeMFYLM23,
 author = {Ke, Guolin and Meng, Qi and Finley, Thomas and Wang, Taifeng and Chen, Wei and Ma, Weidong and Ye, Qiwei and Liu, Tie-Yan},
 booktitle = {Advances in Neural Information Processing Systems},
 editor = {I. Guyon and U. Von Luxburg and S. Bengio and H. Wallach and R. Fergus and S. Vishwanathan and R. Garnett},
 pages = {},
 publisher = {Curran Associates, Inc.},
 title = {LightGBM: A Highly Efficient Gradient Boosting Decision Tree},
 url = {https://proceedings.neurips.cc/paper_files/paper/2017/file/6449f44a102fde848669bdd9eb6b76fa-Paper.pdf},
 volume = {30},
 year = {2017}
}

@inproceedings{ProkhorenkovaGVDG18,
author = {Prokhorenkova, Liudmila and Gusev, Gleb and Vorobev, Aleksandr and Dorogush, Anna Veronika and Gulin, Andrey},
title = {CatBoost: unbiased boosting with categorical features},
year = {2018},
publisher = {Curran Associates Inc.},
address = {Red Hook, NY, USA},
booktitle = {Proceedings of the 32nd International Conference on Neural Information Processing Systems},
pages = {6639–6649},
numpages = {11},
location = {Montr\'{e}al, Canada},
series = {NIPS'18}
}

@inproceedings{ChenG16,
  author       = {Tianqi Chen and
                  Carlos Guestrin},
  editor       = {Balaji Krishnapuram and
                  Mohak Shah and
                  Alexander J. Smola and
                  Charu C. Aggarwal and
                  Dou Shen and
                  Rajeev Rastogi},
  title        = {XGBoost: {A} Scalable Tree Boosting System},
  booktitle    = {Proceedings of the 22nd {ACM} {SIGKDD} International Conference on
                  Knowledge Discovery and Data Mining, San Francisco, CA, USA, August
                  13-17, 2016},
  pages        = {785--794},
  publisher    = {{ACM}},
  year         = {2016},
  url          = {https://doi.org/10.1145/2939672.2939785},
  doi          = {10.1145/2939672.2939785},
  bibsource    = {dblp computer science bibliography, https://dblp.org}
}

@inproceedings{HogsgaardG25,
  title = 	 {Improved Margin Generalization Bounds for Voting Classifiers},
  author =       {H\o{}gsgaard M\o{}ller, Mikael and Green Larsen, Kasper},
  booktitle = 	 {Proceedings of Thirty Eighth Conference on Learning Theory},
  pages = 	 {2822--2855},
  year = 	 {2025},
  editor = 	 {Haghtalab, Nika and Moitra, Ankur},
  volume = 	 {291},
  series = 	 {Proceedings of Machine Learning Research},
  publisher =    {PMLR},
  url = 	 {https://proceedings.mlr.press/v291/hogsgaard-moller25a.html}
}

@article{SrebroST10,
  author       = {Nathan Srebro and
                  Karthik Sridharan and
                  Ambuj Tewari},
  title        = {Smoothness, Low-Noise and Fast Rates},
  journal      = {CoRR},
  volume       = {abs/1009.3896},
  year         = {2010},
  url          = {http://arxiv.org/abs/1009.3896},
  eprinttype   = {arXiv},
  eprint       = {1009.3896},
  bibsource    = {dblp computer science bibliography, https://dblp.org}
}

@misc{LarsenS25,
      title={Tight Margin-Based Generalization Bounds for Voting Classifiers over Finite Hypothesis Sets},
      author={Kasper Green Larsen and Natascha Schalburg},
      year={2025},
      eprint={2511.20407},
      archivePrefix={arXiv},
      primaryClass={cs.LG},
      url={https://arxiv.org/abs/2511.20407},
}

@book{SchapireF12,
    author = {Schapire, Robert E. and Freund, Yoav},
    title = {Boosting: Foundations and Algorithms},
    publisher = {The MIT Press},
    year = {2012},
    month = {05},
    isbn = {9780262301183},
    doi = {10.7551/mitpress/8291.001.0001},
    url = {https://doi.org/10.7551/mitpress/8291.001.0001},
    eprint = {https://direct.mit.edu/book-pdf/2280056/book_9780262301183.pdf},
}

@article{BartlettFLS98,
author = {Peter Bartlett and Yoav Freund and Wee Sun Lee and Robert E. Schapire},
title = {{Boosting the margin: a new explanation for the effectiveness of voting methods}},
volume = {26},
journal = {The Annals of Statistics},
number = {5},
publisher = {Institute of Mathematical Statistics},
pages = {1651 -- 1686},
year = {1998},
doi = {10.1214/aos/1024691352},
URL = {https://doi.org/10.1214/aos/1024691352}
}

@article{Breiman99,
    author = {Breiman, Leo},
    title = {Prediction Games and Arcing Algorithms},
    journal = {Neural Computation},
    volume = {11},
    number = {7},
    pages = {1493-1517},
    year = {1999},
    month = {10},
    issn = {0899-7667},
    doi = {10.1162/089976699300016106},
    url = {https://doi.org/10.1162/089976699300016106},
    eprint = {https://direct.mit.edu/neco/article-pdf/11/7/1493/814214/089976699300016106.pdf},
}

@article{KoltchinskiiP02,
author = {V. Koltchinskii and D. Panchenko},
title = {{Empirical Margin Distributions and Bounding the Generalization Error of Combined Classifiers}},
volume = {30},
journal = {The Annals of Statistics},
number = {1},
publisher = {Institute of Mathematical Statistics},
pages = {1 -- 50},
year = {2002},
doi = {10.1214/aos/1015362183},
URL = {https://doi.org/10.1214/aos/1015362183}
}

@article{GaoZH13,
title = {On the doubt about margin explanation of boosting},
journal = {Artificial Intelligence},
volume = {203},
pages = {1-18},
year = {2013},
issn = {0004-3702},
doi = {https://doi.org/10.1016/j.artint.2013.07.002},
url = {https://www.sciencedirect.com/science/article/pii/S0004370213000684},
author = {Wei Gao and Zhi-Hua Zhou}
}

@inbook{GronlundKLMN19,
author = {Gr\o{}nlund, Allan and Kamma, Lior and Larsen, Kasper Green and Mathiasen, Alexander and Nelson, Jelani},
title = {Margin-based generalization lower bounds for boosted classifiers},
year = {2019},
publisher = {Curran Associates Inc.},
address = {Red Hook, NY, USA},
booktitle = {Proceedings of the 33rd International Conference on Neural Information Processing Systems},
articleno = {1071},
numpages = {10}
}

@inproceedings{GronlundKL20,
author = {Gr\o{}nlund, Allan and Kamma, Lior and Larsen, Kasper Green},
title = {Margins are insufficient for explaining gradient boosting},
year = {2020},
isbn = {9781713829546},
publisher = {Curran Associates Inc.},
address = {Red Hook, NY, USA},
booktitle = {Proceedings of the 34th International Conference on Neural Information Processing Systems},
articleno = {160},
numpages = {11},
location = {Vancouver, BC, Canada},
series = {NIPS '20}
}

@inproceedings{Ben-DavidLM01,
  author    = {Shai Ben{-}David and
               Philip M. Long and
               Yishay Mansour},
  editor    = {David P. Helmbold and
               Robert C. Williamson},
  title     = {Agnostic Boosting},
  booktitle = {Computational Learning Theory, 14th Annual Conference on Computational
               Learning Theory, {COLT} 2001 and 5th European Conference on Computational
               Learning Theory, EuroCOLT 2001, Amsterdam, The Netherlands, July 16-19,
               2001, Proceedings},
  series    = {Lecture Notes in Computer Science},
  volume    = {2111},
  pages     = {507--516},
  publisher = {Springer},
  year      = {2001},
  url       = {https://doi.org/10.1007/3-540-44581-1\_33},
  doi       = {10.1007/3-540-44581-1\_33},
  bibsource = {dblp computer science bibliography, https://dblp.org}
}

@inproceedings{KalaiK09,
  author    = {Adam Kalai and
               Varun Kanade},
  editor    = {Yoshua Bengio and
               Dale Schuurmans and
               John D. Lafferty and
               Christopher K. I. Williams and
               Aron Culotta},
  title     = {Potential-Based Agnostic Boosting},
  booktitle = {Advances in Neural Information Processing Systems 22: 23rd Annual
               Conference on Neural Information Processing Systems 2009. Proceedings
               of a meeting held 7-10 December 2009, Vancouver, British Columbia,
               Canada},
  pages     = {880--888},
  publisher = {Curran Associates, Inc.},
  year      = {2009},
  url       = {https://proceedings.neurips.cc/paper/2009/hash/13f9896df61279c928f19721878fac41-Abstract.html},
  bibsource = {dblp computer science bibliography, https://dblp.org}
}

@inproceedings{Feldman10,
  author    = {Vitaly Feldman},
  editor    = {Andrew Chi{-}Chih Yao},
  title     = {Distribution-Specific Agnostic Boosting},
  booktitle = {Innovations in Computer Science - {ICS} 2010, Tsinghua University,
               Beijing, China, January 5-7, 2010. Proceedings},
  pages     = {241--250},
  publisher = {Tsinghua University Press},
  year      = {2010},
  url       = {http://conference.iiis.tsinghua.edu.cn/ICS2010/content/papers/20.html},
  bibsource = {dblp computer science bibliography, https://dblp.org}
}

@inproceedings{BrukhimCHM20,
  author    = {Nataly Brukhim and
               Xinyi Chen and
               Elad Hazan and
               Shay Moran},
  editor    = {Hugo Larochelle and
               Marc'Aurelio Ranzato and
               Raia Hadsell and
               Maria{-}Florina Balcan and
               Hsuan{-}Tien Lin},
  title     = {Online Agnostic Boosting via Regret Minimization},
  booktitle = {Advances in Neural Information Processing Systems 33: Annual Conference
               on Neural Information Processing Systems 2020, NeurIPS 2020, December
               6-12, 2020, virtual},
  year      = {2020},
  url       = {https://proceedings.neurips.cc/paper/2020/hash/07168af6cb0ef9f78dae15739dd73255-Abstract.html},
  bibsource = {dblp computer science bibliography, https://dblp.org}
}

@inproceedings{GhaiS24,
  author    = {Udaya Ghai and
               Karan Singh},
  editor    = {Amir Globersons and
               Lester Mackey and
               Danielle Belgrave and
               Angela Fan and
               Ulrich Paquet and
               Jakub M. Tomczak and
               Cheng Zhang},
  title     = {Sample-Efficient Agnostic Boosting},
  booktitle = {Advances in Neural Information Processing Systems 38: Annual Conference
               on Neural Information Processing Systems 2024, NeurIPS 2024, Vancouver,
               BC, Canada, December 10 - 15, 2024},
  year      = {2024},
  url       = {http://papers.nips.cc/paper\_files/paper/2024/hash/b63a24a1832bd14fa945c71f535c0095-Abstract-Conference.html},
  bibsource = {dblp computer science bibliography, https://dblp.org}
}

@article{GhaiS25,
  author     = {Udaya Ghai and
                Karan Singh},
  title      = {Sample-Optimal Agnostic Boosting with Unlabeled Data},
  journal    = {CoRR},
  volume     = {abs/2503.04706},
  year       = {2025},
  url        = {https://doi.org/10.48550/arXiv.2503.04706},
  doi        = {10.48550/ARXIV.2503.04706},
  eprinttype = {arXiv},
  eprint     = {2503.04706},
  bibsource  = {dblp computer science bibliography, https://dblp.org}
}

@article{DaCunhaHPS25,
  author     = {Arthur {da Cunha} and
                Mikael M{\o}ller H{\o}gsgaard and
                Andrea Paudice and
                Yuxin Sun},
  title      = {Revisiting Agnostic Boosting},
  journal    = {CoRR},
  volume     = {abs/2503.09384},
  year       = {2025},
  url        = {https://doi.org/10.48550/arXiv.2503.09384},
  doi        = {10.48550/ARXIV.2503.09384},
  eprinttype = {arXiv},
  eprint     = {2503.09384},
  bibsource  = {dblp computer science bibliography, https://dblp.org}
}

@misc{DaCunhaHP26,
      title={Sample-Near-Optimal Agnostic Boosting with Improved Running Time},
      author={Arthur da Cunha and Mikael Møller Høgsgaard and Andrea Paudice},
      year={2026},
      eprint={2601.11265},
      archivePrefix={arXiv},
      primaryClass={cs.LG},
      url={https://arxiv.org/abs/2601.11265},
}

@InProceedings{karbasiL24,
  title = 	 {The Impossibility of Parallelizing Boosting},
  author =       {Karbasi, Amin and Green Larsen, Kasper},
  booktitle = 	 {Proceedings of The 35th International Conference on Algorithmic Learning Theory},
  pages = 	 {635--653},
  year = 	 {2024},
  editor = 	 {Vernade, Claire and Hsu, Daniel},
  volume = 	 {237},
  series = 	 {Proceedings of Machine Learning Research},
  month = 	 {25--28 Feb},
  publisher =    {PMLR},
  url = 	 {https://proceedings.mlr.press/v237/karbasi24a.html}
}

@inproceedings{daCunhaHL24,
 author = {da Cunha, Arthur and M\o ller H\o gsgaard, Mikael and Larsen, Kasper Green},
 booktitle = {Advances in Neural Information Processing Systems},
 doi = {10.52202/079017-0464},
 editor = {A. Globerson and L. Mackey and D. Belgrave and A. Fan and U. Paquet and J. Tomczak and C. Zhang},
 pages = {14540--14569},
 publisher = {Curran Associates, Inc.},
 title = {Optimal Parallelization of Boosting},
 url = {https://proceedings.neurips.cc/paper_files/paper/2024/file/1a675d804f50509b8e21d0d3ca709d03-Paper-Conference.pdf},
 volume = {37},
 year = {2024}
}

@article{LongS13,
  author  = {Philip M. Long and Rocco A. Servedio},
  title   = {Algorithms and Hardness Results for Parallel Large Margin Learning},
  journal = {Journal of Machine Learning Research},
  year    = {2013},
  volume  = {14},
  number  = {95},
  pages   = {3105--3128},
  url     = {http://jmlr.org/papers/v14/long13a.html}
}

@inbook{XinHJ24,
author = {Xin Lyu and Hongxun Wu and Junzhao Yang},
title = {The Cost of Parallelizing Boosting},
booktitle = {Proceedings of the 2024 Annual ACM-SIAM Symposium on Discrete Algorithms (SODA)},
chapter = {},
pages = {3140-3155},
doi = {10.1137/1.9781611977912.112},
URL = {https://epubs.siam.org/doi/abs/10.1137/1.9781611977912.112},
eprint = {https://epubs.siam.org/doi/pdf/10.1137/1.9781611977912.112}
}

@article{MukherjeeS13,
  author  = {Indraneel Mukherjee and Robert E. Schapire},
  title   = {A Theory of Multiclass Boosting},
  journal = {Journal of Machine Learning Research},
  year    = {2013},
  volume  = {14},
  number  = {14},
  pages   = {437--497},
  url     = {http://jmlr.org/papers/v14/mukherjee13a.html}
}

@inproceedings{BrukhimDM23,
 author = {Brukhim, Nataly and Daniely, Amit and Mansour, Yishay and Moran, Shay},
 booktitle = {Advances in Neural Information Processing Systems},
 editor = {A. Oh and T. Naumann and A. Globerson and K. Saenko and M. Hardt and S. Levine},
 pages = {1403--1425},
 publisher = {Curran Associates, Inc.},
 title = {Multiclass Boosting: Simple and Intuitive Weak Learning Criteria},
 url = {https://proceedings.neurips.cc/paper_files/paper/2023/file/050f8591be3874b52fdac4e1060eeb29-Paper-Conference.pdf},
 volume = {36},
 year = {2023}
}

@inproceedings{SchapireS98,
author = {Schapire, Robert E. and Singer, Yoram},
title = {Improved boosting algorithms using confidence-rated predictions},
year = {1998},
isbn = {1581130570},
publisher = {Association for Computing Machinery},
address = {New York, NY, USA},
url = {https://doi.org/10.1145/279943.279960},
doi = {10.1145/279943.279960},
booktitle = {Proceedings of the Eleventh Annual Conference on Computational Learning Theory},
pages = {80–91},
numpages = {12},
location = {Madison, Wisconsin, USA},
series = {COLT' 98}
}

@InProceedings{bressan25a,
  title = 	 {Of Dice and Games: A Theory of Generalized Boosting},
  author =       {Bressan, Marco and Brukhim, Nataly and Cesa-Bianchi, Nicol{\`o} and Esposito, Emmanuel and Mansour, Yishay and Moran, Shay and Thiessen, Maximilian},
  booktitle = 	 {Proceedings of Thirty Eighth Conference on Learning Theory},
  pages = 	 {596--640},
  year = 	 {2025},
  editor = 	 {Haghtalab, Nika and Moitra, Ankur},
  volume = 	 {291},
  series = 	 {Proceedings of Machine Learning Research},
  month = 	 {30 Jun--04 Jul},
  publisher =    {PMLR},
  url = 	 {https://proceedings.mlr.press/v291/bressan25a.html}
}

@InProceedings{Servedio01,
author="Servedio, Rocco A.",
editor="Helmbold, David
and Williamson, Bob",
title="Smooth Boosting and Learning with Malicious Noise",
booktitle="Computational Learning Theory",
year="2001",
publisher="Springer Berlin Heidelberg",
address="Berlin, Heidelberg",
pages="473--489",
isbn="978-3-540-44581-4"
}

@InProceedings{Gavinsky02,
author="Gavinsky, Dmitry",
editor="Cesa-Bianchi, Nicol{\`o}
and Numao, Masayuki
and Reischuk, R{\"u}diger",
title="Optimally-Smooth Adaptive Boosting and Application to Agnostic Learning",
booktitle="Algorithmic Learning Theory",
year="2002",
publisher="Springer Berlin Heidelberg",
address="Berlin, Heidelberg",
pages="98--112",
isbn="978-3-540-36169-5"
}

@inbook{Barak09,
author = {Boaz Barak and Moritz Hardt and Satyen Kale},
title = {The Uniform Hardcore Lemma via Approximate Bregman Projections},
booktitle = {Proceedings of the 2009 Annual ACM-SIAM Symposium on Discrete Algorithms (SODA)},
chapter = {},
pages = {1193-1200},
doi = {10.1137/1.9781611973068.129},
URL = {https://epubs.siam.org/doi/abs/10.1137/1.9781611973068.129},
eprint = {https://epubs.siam.org/doi/pdf/10.1137/1.9781611973068.129}
}

@InProceedings{BunCS20,
  title = 	 {Efficient, Noise-Tolerant, and Private Learning via Boosting},
  author =       {Bun, Mark and Carmosino, Marco Leandro and Sorrell, Jessica},
  booktitle = 	 {Proceedings of Thirty Third Conference on Learning Theory},
  pages = 	 {1031--1077},
  year = 	 {2020},
  editor = 	 {Abernethy, Jacob and Agarwal, Shivani},
  volume = 	 {125},
  series = 	 {Proceedings of Machine Learning Research},
  month = 	 {09--12 Jul},
  publisher =    {PMLR},
  url = 	 {https://proceedings.mlr.press/v125/bun20a.html}
}

@article{Drucker97,
author = {Drucker, Harris},
year = {1997},
month = {08},
pages = {},
title = {Improving Regressors Using Boosting Techniques},
journal = {Proceedings of the 14th International Conference on Machine Learning}
}

@article{DuffyH02,
author = {Duffy, Nigel and Helmbold, David},
title = {Boosting Methods for Regression},
year = {2002},
issue_date = {May-June 2002},
publisher = {Kluwer Academic Publishers},
address = {USA},
volume = {47},
number = {2–3},
issn = {0885-6125},
url = {https://doi.org/10.1023/A:1013685603443},
doi = {10.1023/A:1013685603443},
journal = {Mach. Learn.},
month = may,
pages = {153–200},
numpages = {48}
}

@article{Friedman01,
author = {Jerome H. Friedman},
title = {{Greedy function approximation: A gradient boosting machine.}},
volume = {29},
journal = {The Annals of Statistics},
number = {5},
publisher = {Institute of Mathematical Statistics},
pages = {1189 -- 1232},
year = {2001},
doi = {10.1214/aos/1013203451},
URL = {https://doi.org/10.1214/aos/1013203451}
}

@article{BühlmannB03,
author = {Peter Bühlmann and Bin Yu},
title = {Boosting With the L2 Loss},
journal = {Journal of the American Statistical Association},
volume = {98},
number = {462},
pages = {324--339},
year = {2003},
publisher = {Taylor \& Francis},
doi = {10.1198/016214503000125},


URL = {

        https://doi.org/10.1198/016214503000125



},
eprint = {

        https://doi.org/10.1198/016214503000125



}

}

@inproceedings{ImpagliazzoLPS22,
author = {Impagliazzo, Russell and Lei, Rex and Pitassi, Toniann and Sorrell, Jessica},
title = {Reproducibility in learning},
year = {2022},
isbn = {9781450392648},
publisher = {Association for Computing Machinery},
address = {New York, NY, USA},
url = {https://doi.org/10.1145/3519935.3519973},
doi = {10.1145/3519935.3519973},
booktitle = {Proceedings of the 54th Annual ACM SIGACT Symposium on Theory of Computing},
pages = {818–831},
numpages = {14},
location = {Rome, Italy},
series = {STOC 2022}
}

@InProceedings{larsenMS26,
  title = 	 {Improved Replicable Boosting with Majority-of-Majorities},
  author =       {Larsen, Kasper Green and Mathiasen, Markus Engelund and Svendsen, Clement},
  booktitle = 	 {Proceedings of The 37th International Conference on Algorithmic Learning Theory},
  pages = 	 {1--18},
  year = 	 {2026},
  editor = 	 {Telgarsky, Matus and Ullman, Jonathan},
  volume = 	 {313},
  series = 	 {Proceedings of Machine Learning Research},
  month = 	 {23--26 Feb},
  publisher =    {PMLR},
  url = 	 {https://proceedings.mlr.press/v313/larsen26b.html}
}

@misc{AFoL_Lecture05,
  author       = {Patrick Rebeschini},
  title        = {{Lecture Notes in Algorithmic Foundations of Learning: Covering Numbers Bounds for Rademacher Complexity. Chaining}},
  month        = dec,
  year         = {2021},
  url          = {https://www.stats.ox.ac.uk/~rebeschi/teaching/AFoL/22/material/lecture05.pdf},
  note         = {Lecture 5 (Chaining), Department of Statistics, University of Oxford. Version of December 8, 2021},
}

@InProceedings{cunhaLR25,
  title = 	 {Boosting, Voting Classifiers and Randomized Sample Compression Schemes},
  author =       {da Cunha, Arthur and Larsen, Kasper Green and Ritzert, Martin},
  booktitle = 	 {Proceedings of The 36th International Conference on Algorithmic Learning Theory},
  pages = 	 {390--404},
  year = 	 {2025},
  editor = 	 {Kamath, Gautam and Loh, Po-Ling},
  volume = 	 {272},
  series = 	 {Proceedings of Machine Learning Research},
  month = 	 {24--27 Feb},
  publisher =    {PMLR},
  url = 	 {https://proceedings.mlr.press/v272/cunha25a.html}
}
\endgroup

\appendix\section[Calculations for the AdaBoost margin bound]{Calculations Showing That $T\geq \ln(n)/\gamma^2$ Implies $\loss_S^{\gamma/2}(g)=0$}\label[appendix]{sec:appendix}
    To prove the final assertion of \Cref{lem:adaboost-margin}, it suffices to show that the logarithm of the factor raised to the $T$th power is less than $-\gamma^2$. We begin with the following identity:
    \begin{align*}
     \ln{(\sqrt{(1-2\gamma)^{1-\gamma/2}(1+2\gamma)^{1+\gamma/2}})}
     &=\frac{1}{2}\left(\left(1-\frac{\gamma}{2}\right)\ln{(1-2\gamma)}+\left(1+\frac{\gamma}{2}\right)\ln{(1+2\gamma)}\right)
     \\
     &=\frac{1}{2}\left(\ln{\left((1-2\gamma)(1+2\gamma)\right)}+\frac{\gamma}{2}\ln{\left(\frac{1+2\gamma}{1-2\gamma}\right)}\right)
     \\
     &=\frac{1}{2}\left(\ln{(1-4\gamma^{2})}+\frac{\gamma}{2}\ln{\left(\frac{1+2\gamma}{1-2\gamma}\right)}\right).
    \end{align*}
    For $|x|<1$, we use the expansion
    $
     \ln{(1-x)}=-\sum_{k=1}^{\infty} \frac{x^{k}}{k}.
    $
    Furthermore, we have
    \begin{align*}
        \ln{\left(\frac{1+x}{1-x}\right)}
        =
        \ln{(1+x)}-\ln{(1-x)}
        =-\sum_{k=1}^{\infty} \frac{(-x)^{k}}{k}-\left(-\sum_{k=1}^{\infty} \frac{x^{k}}{k}\right)
        =2\sum_{k=1}^{\infty} \frac{x^{2k-1}}{2k-1}.
    \end{align*}
    Consequently,
    $
     x\ln{\left(\frac{1+x}{1-x}\right)}=2\sum_{k=1}^{\infty} \frac{x^{2k}}{2k-1}.
   $
    Because $0<\gamma<1/2$ by the assumptions of \Cref{lem:adaboost-margin}, substituting $x=4\gamma^2$ into the logarithmic expansion and $x=2\gamma$ into the log-ratio expansion is admissible and gives
    \begin{align*}
      \ln{(\sqrt{(1-2\gamma)^{1-\gamma/2}(1+2\gamma)^{1+\gamma/2}})}
     &=\frac{1}{2}\left(\ln{(1-4\gamma^{2})}+\frac{\gamma}{2}\ln{\left(\frac{1+2\gamma}{1-2\gamma}\right)}\right)
     \\
     &=\frac{1}{2}\left(-\sum_{k=1}^{\infty} \frac{(4\gamma^{2})^{k}}{k}+\frac{1}{2}\sum_{k=1}^{\infty} \frac{(2\gamma)^{2k}}{2k-1}\right)
     \\
     &=\frac{1}{2}\left(-\sum_{k=1}^{\infty} \frac{(2\gamma)^{2k}}{k}+\frac{1}{2}\sum_{k=1}^{\infty} \frac{(2\gamma)^{2k}}{2k-1}\right)
     \\
     &=\sum_{k=1}^{\infty} \left(\frac{1}{4(2k-1)}-\frac{1}{2k}\right)(2\gamma)^{2k}
     <-\gamma^{2}.
    \end{align*}
    The last inequality follows because $\frac{1}{4(2k-1)}-\frac{1}{2k}<0$ for every $k\geq 2$, while $\frac{1}{4(2k-1)}-\frac{1}{2k}=-\frac14$ for $k=1$. Therefore, \Cref{lem:adaboost-margin} and the displayed logarithmic bound imply that, if $T\geq \ln(n)/\gamma^2$, then $\loss_S^{\gamma/2}(g)<e^{-T\gamma^2}\leq 1/n$. Since the empirical margin loss is a multiple of $1/n$, it follows that $\loss_S^{\gamma/2}(g)=0$. This proves the final assertion of \Cref{lem:adaboost-margin}.

\end{document}